\documentclass[runningheads]{llncs}
\usepackage{amsmath}
\usepackage{graphicx}
\usepackage{multirow}
\usepackage{diagbox}
\usepackage{enumitem}
\usepackage{float}
\usepackage{amsmath,amssymb}

\begin{document}
\title{Merging variables: one technique of search in pseudo-Boolean optimization\thanks{This is a version of the paper accepted to MOTOR 2019 conference (http://motor2019.uran.ru/). In this version we fixed a minor number of typos and presented more detailed proof of Lemma 4.}}

\titlerunning{Merging Variables Principle}

\author{Alexander A. Semenov\orcidID{0000-0001-6172-4801}}
\authorrunning{A. Semenov}
\institute{ISDCT SB RAS, Irkutsk, Russia\\
\email{biclop.rambler@yandex.ru}}

\maketitle           

\begin{abstract}
In the present paper we describe new heuristic technique, which can be applied to the optimization of pseudo-Boolean functions including Black-Box functions. This technique is based on a simple procedure which consists in transition from the optimization problem over Boolean hypercube to the optimization problem of auxiliary function in a specially constructed metric space. It is shown that there is a natural connection between the points of the original Boolean hypercube and points from the new metric space. For the Boolean hypercube with fixed dimension it is possible to construct a number of such metric spaces. The proposed technique can be considered as a special case of Variable Neighborhood Search, which is focused on pseudo-Boolean optimization.
Preliminary computational results show high efficiency of the proposed technique on some reasonably hard problems.
Also it is shown that the described technique in combination with the well-known (1+1)-Evolutionary Algorithm allows to decrease the upper bound on the runtime of this algorithm for arbitrary pseudo-Boolean functions.

\keywords{pseudo-Boolean optimization \and local search \and Variable Neighborhood Search \and (1+1)-Evolutionary Algorithm \and Boolean satisfiability problem} 

\end{abstract}

\section{Basic notions and methods}

Let $\{0,1\}^n$ be a set of all possible binary vectors (strings) of length $n$. The set $\{0,1\}^n$ is sometimes called a \textit{Boolean hypercube}. Let us associate with $\{0,1\}^n$ a set consisting of $n$ symbols: $X = \{x_1, \ldots, x_n\}$. The elements of $X$ will be referred to as \textit{Boolean variables}. 
Further we will consider $\{0,1\}^n$ as a set of all possible assignments of variables from $X$. For an arbitrary $X' \subseteq X$ by $\{0,1\}^{|X'|}$ we will denote a set of all possible assignments of variables from $X'$.

A pseudo-Boolean function (PBF) \cite{Boros} is an arbitrary total function of~the~kind 
\begin{equation}
\label{eq1}
f:\{0,1\}^n\rightarrow \mathbb{R}.
\end{equation}

\begin{example}
Consider an arbitrary Conjunctive Normal Form (CNF) $C$, where $X = \{x_1, \ldots, x_n\}$ is a set of Boolean variables from this CNF. Let us associate with an arbitrary $\alpha \in \{0,1\}^n$ the number of clauses that take the value of $1$ when their variables take the values from $\alpha$. Denote the resulting function by $f_C$. It is easy to see that $f_C$ is a function of the kind $f_C:\{0,1\}^n \rightarrow \mathbb{N}_0$ ($\mathbb{N}_0 = \{0,1,2, \ldots\}$) and $\max\limits_{\{0,1\}^n}f_C \leq m$, where $m$ is the number of clauses in $C$. Then CNF $C$ is satisfiable if and only if  $\max\limits_{\{0,1\}^n}f_C = m$. 
The problem $f_C\underset{\{0,1\}^n}{\rightarrow}\max$ 
represents the 
optimization formulation of the Boolean Satisfiability problem (SAT) and is often referred to as MaxSAT \cite{Handbook}. This problem is NP-hard, so there is a huge class of combinatorial problems, which can be effectively reduced to it.
\end{example}

The main result of the present paper is a technique applicable in the context of several common metaheuristic schemes. Before proceeding to its description, let us briefly describe the basic metaheuristics used below.

First, we will consider the simplest computational scheme, which belongs to the class of the local search methods. The concept of a \textit{neighborhood} in a search space is at the core of the algorithms from this class. With each point of a search space the \textit{neighborhood function} \cite{Burke} associates a set of neighboring points. This~set is called the neighborhood of the considered point. 
For an $n$-dimensional Boolean hypercube the neighborhood function is of the following kind:
\begin{equation}
\label{eq2}
    \aleph: \{0,1\}^n \rightarrow 2^{\{0,1\}^n}.
\end{equation}

A simple way to define function \eqref{eq2} is to associate an arbitrary $\alpha\in \{0,1\}^n$ with all points from $\{0,1\}^n$ for which the Hamming distance \cite{McWilliams} from $\alpha$ is not greater than certain $d$.
The number $d$ is referred to as a \textit{radius} of Hamming neighborhood. Hereinafter by $\aleph_d(\alpha)$ we denote a neighborhood of radius $d$ of 
an arbitrary point $\alpha$ of a search space.
By $\langle\{0,1\}^n,\aleph_1 \rangle$ we denote a space $\{0,1\}^n$ in which a neighboorhood of an arbitrary point $\alpha$ is $\aleph_1(\alpha)$.

Below we give a simple example of the local search algorithm which is sometimes referred to as Hill Climbing (HC). We can use this algorithm to maximize the functions of the kind \eqref{eq1}.
One iteration of the HC algorithm consists of the following steps.

\begin{enumerate}
\item[] \textbf{Input}: an arbitrary point $\alpha\in\{0,1\}^n$, a value $f(\alpha)$;
\item $\alpha$ -- current point;

\item traverse the points from  $\aleph_1(\alpha) \setminus \{\alpha\}$, computing for each point $\alpha'$ from this set a value $f(\alpha')$. If there is such a point $\alpha'$, that $f(\alpha') > f(\alpha)$ then go to step 3, otherwise, go to step 4;

\item $\alpha \leftarrow \alpha'$, $f(\alpha) \leftarrow f(\alpha')$, go to step 1;

\item $\alpha^*\leftarrow \alpha$; $(\alpha^*,f(\alpha^*))$ is a local extremum of $f$ on $\{0,1\}^n$;
\item[] \textbf{Output}: $(\alpha^*,f(\alpha^*))$.

\end{enumerate}

By itself, Hill Climbing is a basic heuristic and, generally speaking, it does not guarantee that the global extremum of the considered function will be achieved (except for some specific cases). Usually, during the optimization of an arbitrary function \eqref{eq1} one attempts to go through a number of local extrema. As a result, a point with the best value of the \textit{objective function} \eqref{eq1} is considered to be an output. The best value of this  function found at the current moment is called \textit{Best Known Value} (BKV). 

Without any exaggeration it can be said that over the past half century a huge number of papers have been devoted to describing ways of escaping local extrema.
Listing the key papers in this direction would take up too much space. A good review of the relevant results can be found in \cite{Burke,Luke}.

In some sense, one can view the evolutionary algorithms \cite{Luke} as the alternative to local search methods. This class of algorithms can be described as "a variation on a theme of random walk". The simplest example of such algorithms is the (1+1)-Evolutionary Algorithm shortly denoted as (1+1)-EA \cite{Rudolf}. Below we present the description of one iteration of this algorithm, which will be referred to as \emph{(1+1)-random mutation}.

\begin{itemize}
\item[] \textbf{Input}: an arbitrary point $\alpha\in\{0,1\}^n$, a value $f(\alpha)$;
\item make (1+1)-random mutations of $\alpha$: by going through $\alpha$ in fixed order, change every bit to the opposite with probability $p$; let $\alpha'$ be a result of a random mutation of $\alpha$;
\item 
if for a point $\alpha'$ it holds that $f(\alpha') \geq f(\alpha)$ (assuming that the maximization problem for function \eqref{eq1} is considered), then the next (1+1)-random mutation is applied to $\alpha'$, otherwise, (1+1)-random mutation is applied to $\alpha$ (this situation is called \emph{stagnation});
\item[] \textbf{Output}: $(\alpha', f(\alpha'))$, where $\alpha'$ is the result of several random mutations.
\end{itemize}

The probability $p$ is usually determined as $p = 1/n$. 
It should be noted, that for any function of the kind \eqref{eq1} and points $\alpha, \alpha' \in \{0,1\}^n$ the probability of transition $\alpha \rightarrow \alpha'$ is non-zero. 
Let $\alpha^{\#}$ be the point of the global extremum of function \eqref{eq1}. 
According to \cite{Wegener}, the expected running time of the (1+1)-EA, denoted further as $E_{(1+1)-EA}$, 
is defined as the mean of the (1+1)-random mutations needed to achieve $\alpha^{\#}$ from an arbitrary initial point $\alpha \in \{0,1\}^n$.

The value $E_{(1+1)-EA}$ can be considered as a measure of efficiency for (1+1)-EA. If the value of function \eqref{eq1} is given by the oracle, the nature of which is not taken into account, then it could be shown (see \cite{Wegener}), that $E_{(1+1)-EA} \leq n^n$. It is important that this bound is reached (with minor reservations) for explicitly specified functions \cite{Wegener}. 
On the other hand, for an equiprobable choice of points from a hypercube $\{0,1\}^n$ the expected value for the number of checked points before achieving $\alpha^{\#}$ is not greater than $2^n$.
Thus, in the worst case scenario, (1+1)-EA is extremely inefficient. However, when applied to many practical tasks (1+1)-EA can be surprisingly productive.

\section{Merging Variables Principle (MVP)}

In this section we describe a simple technique which can be applied to the problems of optimization  of arbitrary functions of the kind \eqref{eq1}, including Black-Box functions.

Consider an arbitrary function \eqref{eq1} and the problem $f\underset{\{0,1\}^n}{\rightarrow} \max$ (or $f\underset{\{0,1\}^n}{\rightarrow} \min$). Let us associate with $\{0,1\}^n$ a set of Boolean variables  $X = \{x_1, \ldots, x_n\}$ (considering $\{0,1\}^n$ as a set of all possible assignments of variables from $X$). 

Let us fix an arbitrary positive integer $r: 1 \leq r < n$ and define a new set of variables $Y = \{y_1, \ldots,  y_r\}$. Consider an arbitrary surjection $\mu:X\rightarrow Y$.
With~an arbitrary $y_j \in Y$, $j \in \{1, \ldots, r\}$  we associate a set $X_j$ of preimages of $y_j$ in the context of mapping $\mu$.
Let us link with $y_j$ a set $D_j$, which consists of $2^{|X_j|}$  different symbols of some alphabet: $D_j = \{\beta_{1}^j, \ldots, \beta_{2^{|X_j|}}^j\}$, and fix an arbitrary bijection $\omega_j:D_j \rightarrow \{0,1\}^{|X_j|}$.
Consider a set 
$$
D^{\mu} = D_1 \times \ldots \times D_r.
$$

\begin{definition}
The elements of $D_j$ are called the values of variable
$y_j$, $j \in \{1, \ldots, r\}$ and $D_j$
is called the domain of this variable. 
An arbitrary string $\beta \in D^{\mu}$ is called an assignment of variables from $Y$. Implying all notions which were introduced above we will say that merging mapping $\mu$ is defined. The elements of $Y$ are referred to as {merged variables}.
\end{definition}

Regarding the set $D^\mu$ we note that the Hamming metric is naturally defined on $D^{\mu}$ and thus $D^{\mu}$ is a metric space. 

\begin{lemma}
\label{lemma1}
An arbitrary merging mapping $\mu: X\rightarrow Y$ defines a bijective mapping
$$
\tau_{\mu}: D^{\mu} \rightarrow \{0,1\}^n.
$$
\end{lemma}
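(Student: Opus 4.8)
The plan is to build $\tau_\mu$ explicitly from the bijections $\omega_j$ together with the partition structure that $\mu$ induces on $X$, and then verify that the resulting map is a bijection. First I would record the partition fact: since $\mu$ is a total function, every $x_i$ has exactly one image and therefore lies in exactly one preimage set $X_j$; since $\mu$ is a surjection, each $X_j$ is nonempty. Hence $\{X_1, \ldots, X_r\}$ is a partition of $X$ into $r$ nonempty blocks, and in particular $\sum_{j=1}^{r} |X_j| = n$. This already yields the cardinality bookkeeping I will rely on, namely $|D^\mu| = \prod_{j=1}^{r} |D_j| = \prod_{j=1}^{r} 2^{|X_j|} = 2^{\sum_{j=1}^{r} |X_j|} = 2^n = |\{0,1\}^n|$.

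Next I would define the map. Fix once and for all the natural ordering of the variables inside each block $X_j$ (say by their index in $X$), so that a string in $\{0,1\}^{|X_j|}$ is read unambiguously as a Boolean assignment of the variables of $X_j$. Given $\beta = (\beta^1, \ldots, \beta^r) \in D^\mu$ with $\beta^j \in D_j$, apply $\omega_j$ to each coordinate to obtain $\omega_j(\beta^j) \in \{0,1\}^{|X_j|}$, i.e. an assignment of the variables in $X_j$. Because the blocks partition $X$, these $r$ partial assignments together assign a value to every variable of $X$ exactly once and therefore glue into a single $\alpha \in \{0,1\}^n$; I set $\tau_\mu(\beta) = \alpha$. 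Well-definedness is precisely the disjoint-and-covering property of the partition.

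Finally I would check bijectivity. For injectivity, suppose $\tau_\mu(\beta) = \tau_\mu(\gamma)$ for $\beta = (\beta^1, \ldots, \beta^r)$ and $\gamma = (\gamma^1, \ldots, \gamma^r)$. Restricting the common image to each block $X_j$ recovers $\omega_j(\beta^j) = \omega_j(\gamma^j)$, and since each $\omega_j$ is injective this forces $\beta^j = \gamma^j$ for all $j$, whence $\beta = \gamma$. Surjectivity then follows at once from the equality of finite cardinalities established above; alternatively one may exhibit a preimage directly by restricting an arbitrary $\alpha \in \{0,1\}^n$ to each block and setting $\beta^j = \omega_j^{-1}(\alpha|_{X_j})$.

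I expect no serious mathematical obstacle here: the content is simply that the product of the fiberwise bijections $\omega_j$, indexed by a partition of $X$, realizes the canonical identification $\prod_{j=1}^{r} \{0,1\}^{|X_j|} \cong \{0,1\}^n$. The only point that needs care is notational bookkeeping — fixing consistent orderings of the blocks so that the gluing of partial assignments into a global one is unambiguous, and confirming that $\tau_\mu$ depends on nothing beyond those fixed conventions.
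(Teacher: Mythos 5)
Your proposal is correct and follows essentially the same route as the paper's proof: both construct $\tau_\mu$ explicitly by applying the bijections $\omega_j$ coordinate-wise and gluing the resulting partial assignments via the partition $\{X_1,\ldots,X_r\}$ that surjectivity of $\mu$ guarantees, then verify injectivity and surjectivity. Your write-up is in fact somewhat more rigorous than the paper's (which dismisses injectivity as ``easy to see'' and argues surjectivity only by contradiction), and your cardinality count $|D^\mu| = 2^n$ is a nice additional cross-check, but these are refinements of the same argument rather than a different approach.
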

\begin{proof} Assume that for a set of Boolean variables $X$, $|X|=n$, a merging mapping $\mu$, $\mu:X \rightarrow Y$, $|Y|=r$, $1 \leq r < n$ is given. The fact that $\mu$ is surjection means that sets $X_j$, $j \in \{1,\ldots,r\}$ do not intersect, and any variable from $X$ turns out to be in some set of the kind $X_j$. Consider an arbitrary assignment $\beta \in D^{\mu}$. Let $\beta^j$ be a symbol, located in the coordinate with the number $j, j\in\{1,\ldots,r\}$ of $\beta$. 
Consider set $X_j$. Let $\alpha^j$ be a binary string associated with an element $\beta^j$ by bijection $\omega_j$. 
Let us view $\alpha^j$ as an assignment of variables from $X_j$. 
Thus, bijections $\omega_j$, $j \in \{1, \ldots, r\}$ associate all coordinates of $\beta$ with binary strings thereby setting the values of all variables from $X$. 
Consequently, an arbitrary string $\beta \in D^{\mu}$ is associated with some string $\alpha \in \{0,1\}^n$. 
Denote the resulting function by $\tau_{\mu}: D^{\mu} \rightarrow \{0,1\}^n$. Note that $Range$ $\tau_{\mu} = \{0,1\}^n$. 
If we assume that there is a vector $\alpha \in \{0,1\}^n$, which does not have a preimage in $D^{\mu}$ for a given $\tau_{\mu}$, 
then it contradicts with the properties of bijections $\omega_j$, $j \in \{1, \ldots, r\}$. Thus, $\tau_{\mu}$ is a surjection. 
Also it is easy to see, that two arbitrary different elements from $D^{\mu}$ have different images for a given $\tau_{\mu}$ (injection). Consequently, $\tau_{\mu}$ is bijection. {The Lemma 1 is proved}.
\end{proof}

\begin{definition}
Function $\tau_{\mu}$, defined in the proof of Lemma \ref{lemma1}, is called a bijection induced by a merging mapping $\mu$.
\end{definition}

\begin{example} 
Assume that $X = \{x_1,x_2,x_3,x_4,x_5\}$. Let us define the mapping $\mu: X \rightarrow Y$, $Y = \{y_1,y_2,y_3\}$ as follows:
$$
X_1 = \{x_1,x_4\}, X_2 = \{x_2\}, X_3 = \{x_3,x_5\}.
$$
The domains of variables $y_1,y_2,y_3$ are the following: $D_1 = \{\beta_1^1,\beta_2^1,\beta_3^1,\beta_4^1\}$, $D_2 = \{\beta_1^2,\beta_2^2\}$, $D_3 = \{\beta_1^3,\beta_2^3,\beta_3^3,\beta_4^3,\}$. Bijections $\omega_j$, $j \in \{1,2,3\}$ are defined as it is shown in figure \ref{fig_1}.
Thus, the mapping $\tau_{\mu}: D^{\mu}\rightarrow \{0,1\}^5$ is defined. By Lemma \ref{lemma1} it is a bijection. For example, $\tau_{\mu}(\beta_3^1,\beta_2^2,\beta_4^3) = (11101)$.
\begin{figure}[H]
    \centering
    
\begin{tabular}{ccccc}
$\omega_1$&&$\omega_2$&&$\omega_3$\\
\begin{tabular}{|c|c|}
\hline
$\beta^1_1$& $00$\\
\hline
$\beta^1_2$& $01$\\
\hline
$\beta^1_3$& $10$\\
\hline
$\beta^1_4$& $11$\\
\hline
\end{tabular}
&&
\begin{tabular}{|c|c|}
\hline
$\beta^2_1$& $0$\\
\hline
$\beta^2_2$& $1$\\
\hline
\end{tabular}
&&
\begin{tabular}{|c|c|}
\hline
$\beta^3_1$& $00$\\
\hline
$\beta^3_2$& $01$\\
\hline
$\beta^3_3$& $10$\\
\hline
$\beta^3_4$& $11$\\
\hline
\end{tabular}
\end{tabular}
    \caption{Bijections $\omega_j$, $j \in \{1,2,3\}$ which define the mapping $\tau_{\mu}: D^{\mu}\rightarrow \{0,1\}^5$}
    \label{fig_1}
\end{figure}

\end{example}

The main idea of the technique presented below consists in transitioning from the optimization problem of the original function \eqref{eq1} on $\{0,1\}^n$ to the optimization problem of specially constructed function on $D^{\mu}$ (for a given merging mapping $\mu: X \rightarrow Y$).

\begin{definition}
Consider an optimization problem for an arbitrary function \eqref{eq1}. Let $\mu: X \rightarrow Y$  be an arbitrary merging mapping. Consider the function
$$
F_{f,\mu}:D^{\mu} \rightarrow \bbbr,
$$
defined in the following way: $F_{f,\mu}(\beta) = f(\tau_{\mu}(\beta))$, in which $\tau_{\mu}$ is a bijection induced by $\mu$. Function $F_{f,\mu}$ is called $\mu$-conjugated with $f$.
\end{definition}

\begin{lemma}
$$
\mathrm{extr}_{\{0,1\}^n}f = \mathrm{extr}_{D^{\mu}}F_{f,\mu}.
$$
(here «$extr$» can be understood as $min$ or $max$).
\end{lemma}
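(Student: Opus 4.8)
The plan is to derive the equality directly from the fact, established in Lemma~\ref{lemma1}, that $\tau_\mu$ is a bijection between $D^\mu$ and $\{0,1\}^n$. The key observation is that an extremum of a function over its domain depends only on the \emph{set of values} the function attains, so it suffices to show that $F_{f,\mu}$ and $f$ attain exactly the same set of real values.

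First I would record that, by the definition of the $\mu$-conjugated function, $F_{f,\mu}(\beta) = f(\tau_\mu(\beta))$ for every $\beta \in D^\mu$, i.e. $F_{f,\mu} = f \circ \tau_\mu$. Then I would compare the two value-sets
$$
\{\, F_{f,\mu}(\beta) : \beta \in D^\mu \,\} \quad \text{and} \quad \{\, f(\alpha) : \alpha \in \{0,1\}^n \,\}.
$$
Since $\tau_\mu$ is surjective onto $\{0,1\}^n$ by Lemma~\ref{lemma1}, as $\beta$ ranges over $D^\mu$ the image $\tau_\mu(\beta)$ ranges over all of $\{0,1\}^n$; hence the left-hand set equals $\{\, f(\tau_\mu(\beta)) : \beta \in D^\mu \,\} = \{\, f(\alpha) : \alpha \in \{0,1\}^n \,\}$, the right-hand set. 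Note that injectivity of $\tau_\mu$ is not even required for the value-sets to coincide, although it additionally yields a one-to-one correspondence between the points at which a given value is attained.

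Finally, because both $D^\mu$ and $\{0,1\}^n$ are finite, the extremum — whether $\min$ or $\max$ — is attained and equals the smallest, respectively largest, element of the common value-set. Equal value-sets therefore force $\mathrm{extr}_{\{0,1\}^n}f = \mathrm{extr}_{D^{\mu}}F_{f,\mu}$ for both readings of $\mathrm{extr}$. I do not anticipate a genuine obstacle: the only point worth stating carefully is that the claim concerns the extremal \emph{values}, which coincide, and not the extremal \emph{points}, which live in different spaces and are related through $\tau_\mu$ rather than being equal.
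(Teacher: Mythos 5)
Your proof is correct and follows essentially the same route as the paper: both arguments reduce the claim to the fact that $F_{f,\mu} = f \circ \tau_\mu$ and that $\tau_\mu$ maps $D^\mu$ onto $\{0,1\}^n$ (Lemma~\ref{lemma1}), so the two functions attain the same set of values and hence the same extrema. Your additional remarks --- that surjectivity alone suffices for the value-sets to coincide, and that finiteness guarantees the extrema are attained --- are accurate refinements of the paper's more informal wording, not a different method.
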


\begin{proof}In the context of Lemma \ref{lemma1} this equality is in fact evident. Indeed, there is a bijection $\tau_{\mu}$ between $\{0,1\}^n$ and $D^{\mu}$. The value of function $F_{f,\mu}$ in an arbitrary point $\beta \in D^{\mu}$ is equal to the value of $f$ in point $\alpha = \tau_{\mu}(\beta)$. Thus, the smallest (largest) value of $F_{f,\mu}$ on $D^{\mu}$ is equal to the smallest (largest) value of $f$ on $\{0,1\}^n$. The Lemma 2 is proved.
\end{proof}

The following property gives us the exact value of the number of different merging mappings for the set $X$ of power $n$.

\begin{lemma}
\label{lemma3}
Let $f$ be an arbitrary function of the kind \eqref{eq1}. Then, the number of different merging mappings of the kind $\mu: X \rightarrow Y$ is $\sum_{r=1}^{n-1}r!\cdot S(n,r)$, where $S(\cdot,\cdot)$ -- is {a} Stirling number of the second kind.
\end{lemma}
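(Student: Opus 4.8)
The plan is to reduce the count to the classical enumeration of surjections between finite sets. First I would fix the meaning of ``different merging mappings'': by the definition preceding the lemma, fixing $r$ together with a surjection $\mu: X \rightarrow Y$ (where $|Y| = r$) already determines the sets $X_j$ of preimages, and the remaining data---the domains $D_j$ and the bijections $\omega_j$---are attached afterwards. Since the lemma speaks of merging mappings ``of the kind $\mu: X \rightarrow Y$'', two merging mappings are to be regarded as distinct precisely when their underlying surjections $\mu$ differ. Thus I would count surjections, organized according to the value of $r$, which by construction ranges over $1 \leq r < n$.

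Second, for a fixed $r$ I would count the surjections $\mu: X \rightarrow Y$ with $|X| = n$ and $Y = \{y_1, \ldots, y_r\}$. The key observation is that any such surjection partitions $X$ into its $r$ nonempty fibers $X_1, \ldots, X_r$ (nonemptiness being exactly the surjectivity condition, as already noted in the proof of Lemma~\ref{lemma1}), and conversely every way of attaching an unordered partition of $X$ into $r$ nonempty blocks to the $r$ labels $y_1, \ldots, y_r$ produces a distinct surjection. The number of unordered partitions of an $n$-element set into $r$ nonempty blocks is, by definition, the Stirling number of the second kind $S(n,r)$; each such partition can be matched bijectively to the labels $y_1, \ldots, y_r$ in $r!$ ways. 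Hence, for each fixed $r$, there are exactly $r! \cdot S(n,r)$ merging mappings.

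Third, since $r$ may be any integer with $1 \leq r < n$, and these cases are mutually exclusive (surjections onto sets of distinct cardinalities cannot coincide), I would sum the counts over $r$ to obtain the total
$$
\sum_{r=1}^{n-1} r! \cdot S(n,r),
$$
which is the claimed value.

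I do not expect a genuine obstacle here: the statement is a direct application of the standard surjection-counting identity, and no estimate or limiting argument is involved. The only point requiring care is conceptual rather than technical---making explicit that a merging mapping is individuated by its surjection $\mu$ alone, so that the labeling of $Y$ supplies the factor $r!$ while the auxiliary bijections $\omega_j$ do \emph{not} inflate the count, and that the admissible range of $r$ is $\{1, \ldots, n-1\}$ rather than $\{1, \ldots, n\}$, since merging must strictly reduce the number of variables.
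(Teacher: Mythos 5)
Your proposal is correct and follows essentially the same argument as the paper: for each fixed $r$, count the $S(n,r)$ unordered partitions of $X$ into $r$ nonempty blocks, multiply by the $r!$ ways of assigning blocks to the labels $y_1,\ldots,y_r$, and sum over $1 \leq r \leq n-1$. Your explicit remark that a merging mapping is individuated by the surjection $\mu$ alone (so the bijections $\omega_j$ do not inflate the count) is a useful clarification that the paper leaves implicit, but it does not change the substance of the proof.
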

\begin{proof}Assume that $X = \{x_1, \ldots, x_n\}$. For an arbitrary merging mapping $\mu: X \rightarrow Y$ a set $Y$ can contain $1,2,\ldots,n-1$ variables. An arbitrary merging mapping is constructed in two steps. The first step is to divide $X$ into $r$ parts (the order of the elements in each part does not matter). As a result there is a composition of sets $X_1,\ldots,X_r$. 
At the second step each set $X_j$, $j\in \{1,\ldots,r\}$ is associated with a variable from $Y = \{y_1,\ldots, y_r\}$. The number of unordered partitionings of $n$-element set into $r$ parts is $S(n,r)$ (see, for example, \cite{Stanley}). Each~unordered partitioning of $X$ into $r$ parts can be mapped to $Y$ ($|Y|=r$) in $r!$ ways. {The Lemma 3 is proved}.
\end{proof}

Let us summarize the contents of the present section. The \textit{Merging Variables Principle} (MVP) consists in the transition from the optimization of an arbitrary function $f$ of the kind \eqref{eq1} over a Boolean hypercube to the optimization problem of a function  which is $\mu$-conjugated with $f$ over metric space $D^{\mu}$. The main goal of the further sections is to demonstrate the benefits of MVP.

\section{Combining MVP with local search}

For an arbitrary function $f$ of the kind \eqref{eq1} consider a problem $f\underset{{\{0,1\}^n}}{\rightarrow} \max$. 
Assume, that $\{0,1\}^n$ is a set of all possible assignments of variables from set $X = \{x_1,\ldots,x_n\}$. 
Consider a merging mapping $\mu: X \rightarrow Y$, $Y=\{y_1,\ldots,y_r\}$, $1 \leq r<n$ and 
a metric space (with Hamming metric) $D^{\mu} = D_1 \times \ldots \times D_r$. 
Let $\tau_{\mu}:D^{\mu} \rightarrow \{0,1\}^n$ be a bijection induced by $\mu$. 
We solve the maximization problem of function $F_{f,\mu}$ on $D^{\mu}$. 
Let us define the neighborhood function over $D^{\mu}$ in the following way. 
For~an arbitrary $\beta \in D^{\mu}$ assume that
$$
\aleph_1^{\mu}(\beta) = \{\gamma \in D^{\mu}: d_H(\beta,\gamma) \leq 1\}.
$$
In other words, the neighborhood of an arbitrary point $\beta$
contains all points from $D^{\mu}$, for which the Hamming distance $d_H$ between them and $\beta$ is at most $1$. Let~us denote a metric space $D^{\mu}$ with the neighborhood structure $\aleph_1^{\mu}$ by $\langle D^{\mu}, \aleph_1^{\mu}\rangle$.

{Below} we will use a term "random merging mapping", which refers to any construction of mapping $\mu: X \rightarrow Y$ by means of a random experiment. The most natural is a scheme of random arrangements of particles in boxes \cite{Feller}. Specifically, for a fixed $r, 1\leq r<n$ assume that an arbitrary variable $y_j$, $j\in\{1,\ldots,r\}$ is associated with a box which can accommodate $n$ particles. A set $X$ is considered as a set containing $n$ particles which are randomly scattered in $r$ boxes according to the sampling without replacement.

Below we present a variant of Hill Climbing algorithm, which uses MVP (Merging Variable Hill Climbing algorithm, MVHC).

\begin{enumerate}
\item[] \textbf{Input}: an arbitrary point $\alpha\in\{0,1\}^n$, $f(\alpha)$;
\item define a random merging mapping $\mu:X\rightarrow Y$, $Y = \{y_1,\ldots,y_r\}$, $1 \leq r < n$; 

\item construct a point $\beta=\tau^{-1}_{\mu}(\alpha)$ in $\langle D^{\mu}, \aleph_1^{\mu}\rangle$, $D^{\mu} = D_1 \times \ldots \times D_r$, where $D_j$, $j \in\{1,\ldots,r\}$ are domains of $y_j$;
    
\item run HC in $\langle D^{\mu}, \aleph_1^{\mu}\rangle$ starting from point $\beta$ for an objective function $F_{f,\mu}$; let $\beta^*$ be a local maximum, achieved in one iteration of HC;
    
\item construct a point $\alpha^* = \tau_{\mu}(\beta^*)$ ($\alpha^* \in \{0,1\}^n$);
\item[] \textbf{Output}: $(\alpha^*,f(\alpha^*))$.

\end{enumerate}

\begin{theorem}
\label{theorem1}
In the context of the MVHC scheme described above let $\beta=\tau^{-1}_{\mu}(\alpha)$ be a point in $\langle D^{\mu}, \aleph_1^{\mu}\rangle$ which is not a local maximum. Then $f(\alpha^*) > f(\alpha)$, where $\alpha^* = \tau_{\mu}(\beta^*)$ and $\beta^*$ is a local maximum, achieved by HC in $\langle D^{\mu}, \aleph_1^{\mu}\rangle$ in one iteration, starting from point $\beta$.
\end{theorem}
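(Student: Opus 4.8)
The plan is to exploit the defining relation $F_{f,\mu}(\beta) = f(\tau_{\mu}(\beta))$ together with the monotone nature of Hill Climbing, so that the entire argument takes place inside $\langle D^{\mu}, \aleph_1^{\mu}\rangle$ and is then transported back by the bijection $\tau_{\mu}$ of Lemma \ref{lemma1}. First I would record the two essential identities: since $\alpha = \tau_{\mu}(\beta)$ we have $f(\alpha) = F_{f,\mu}(\beta)$, and since $\alpha^* = \tau_{\mu}(\beta^*)$ we have $f(\alpha^*) = F_{f,\mu}(\beta^*)$. Consequently the claimed inequality $f(\alpha^*) > f(\alpha)$ is \emph{equivalent} to $F_{f,\mu}(\beta^*) > F_{f,\mu}(\beta)$, which reduces the theorem to a statement about the objective $F_{f,\mu}$ on the metric space $D^{\mu}$.

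Next I would analyze the iteration of HC started at $\beta$. By hypothesis $\beta$ is not a local maximum, which by the definition of $\aleph_1^{\mu}$ means there exists $\beta' \in \aleph_1^{\mu}(\beta)\setminus\{\beta\}$ with $F_{f,\mu}(\beta') > F_{f,\mu}(\beta)$. Therefore, at step~2 of the HC scheme the algorithm does not terminate immediately but performs at least one strictly improving move. I would then observe that the HC replacement rule only ever passes from the current point to a strictly better one, so that the points $\beta = \beta^{(0)}, \beta^{(1)}, \ldots$ visited during this iteration satisfy $F_{f,\mu}(\beta^{(0)}) < F_{f,\mu}(\beta^{(1)}) < \cdots$, i.e. a strictly increasing chain of real values.

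Then I would invoke finiteness. The set $D^{\mu} = D_1 \times \ldots \times D_r$ is finite (indeed $|D^{\mu}| = 2^n$ by Lemma \ref{lemma1}), so a strictly increasing sequence of values of $F_{f,\mu}$ on it must be finite; hence the iteration halts at some $\beta^*$ admitting no strictly improving neighbor, which is exactly the local maximum returned by HC. Chaining the inequalities from the first move onward gives $F_{f,\mu}(\beta^*) \geq F_{f,\mu}(\beta^{(1)}) > F_{f,\mu}(\beta)$, that is $F_{f,\mu}(\beta^*) > F_{f,\mu}(\beta)$. Substituting the identities from the first step yields $f(\alpha^*) > f(\alpha)$, as required.

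I do not expect a genuinely hard step here: the argument is essentially bookkeeping built on the definitions already established. The one point that deserves care is the very first inequality, namely that the non-local-maximum hypothesis forces \emph{at least one} strict improvement, so that the resulting chain is strict rather than merely non-decreasing. Everything else follows from the monotonicity of the HC moves and the finiteness of $D^{\mu}$, and the passage between $f$ and $F_{f,\mu}$ is immediate from the bijectivity of $\tau_{\mu}$.
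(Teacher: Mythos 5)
Your proof is correct and follows essentially the same route as the paper: both reduce the claim to $F_{f,\mu}(\beta^*) > F_{f,\mu}(\beta)$ via the identity $F_{f,\mu} = f \circ \tau_{\mu}$ and then use the fact that HC started at a non-local-maximum strictly improves. The paper simply asserts that strict improvement in one line, whereas you fill in the details (strictly increasing chain of values plus finiteness of $D^{\mu}$), which is a harmless elaboration rather than a different argument.
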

\begin{proof}
Let $\mu,\tau_{\mu}, D^{\mu},\alpha, \alpha^*, \beta, \beta^*$ be the objects from the description of the MVHC algorithm and the theorem formulation. Since $\beta$ is not a local maximum in the space $\langle D^{\mu}, \aleph_1^{\mu}\rangle$, then $F_{f,\mu}(\beta^*) > F_{f,\mu}(\beta)$. Thus, (by the definition of function $F_{f,\mu}$) it follows that $f(\tau_{\mu}(\beta^*)) > f(\tau_{\mu}(\beta))$. Therefore, $f(\alpha^*) > f(\alpha)$. {The Theorem is proved}.
\end{proof} 

The MVHC algorithm can be used to construct an iterative computational scheme in which the 
random merging mapping is launched multiple times: in particular, the output $\alpha^*$ of an arbitrary iteration
can be used as an input for the following iteration.

Below we would like to comment on a number of features of the proposed algorithm and show the techniques that can improve the 
practical effectiveness of MVHC. The proofs for the properties described below are not shown due to their simplicity and limitations on the volume of the paper.

\begin{enumerate}[label=\alph*.]
\item 
Note that point $\alpha$ can be a local maximum of function \eqref{eq1}  in the space $\langle\{0,1\}^n, \aleph_1\rangle$, while point $\beta = \tau_{\mu}^{-1}(\alpha)$ is simultaneously not a local maximum  of function $\mu$-conjugated with \eqref{eq1} in $\langle D^{\mu}, \aleph_1^{\mu}\rangle$. This~fact makes it possible to view MVHC as a special case of Variable Neighborhood Search (VNS) metaheuristic strategy \cite{Mlad-97,Hans-01,Hans-17}. Indeed, let $\alpha$ be an arbitrary point in  $\{0,1\}^n$, $\mu: X\rightarrow Y$ be an arbitrary merging mapping and $\tau_{\mu}:D^{\mu}\rightarrow \{0,1\}^n$ be a bijection induced by $\mu$. Define the neighborhood of  $\alpha$ in $\{0,1\}^n$ as follows:
\begin{equation}
\label{eq3}
\tilde{\aleph}(\alpha) = \{\tau_{\mu}(\gamma)|\gamma \in \aleph_1^{\mu}(\tau_{\mu}^{-1}(\alpha))\},    
\end{equation}
where $\aleph_1^{\mu}(\beta)$ is the Hamming neighborhood of radius 1 for the point $\beta$ in $D^{\mu}$. Note that \eqref{eq3} defines the neighborhood function over $\{0,1\}^n$. The different merging mappings will yield different neighborhood structures in the context of \eqref{eq3}. From this point of view, the Theorem \ref{theorem1} is the variant of the main VNS principle saying that the local extremum of a function with regard to one neighborhood structure may not be a local extremum of this function with regard to a different neighborhood structure. The Lemma \ref{lemma3} says that in the context of MVHC there exist numerous ways to construct neighborhood structures even for small $n$ and $r$ (say, $n=100$ and $r=10$). 

\item 
Let $\mu:X\rightarrow Y$, $|X| = n$, $|Y| = r$ be an arbitrary random mapping. Let~$X_1,\ldots, X_r$ be the sets of preimages of variables from $Y$ with respect to $\mu$, and $|X_1| = l_1, \ldots, |X_r| = l_r$; $l_1 + \ldots+ l_r = n$. Then~for an arbitrary point $\beta \in D^{\mu}$ the following holds:
\begin{equation}
\label{eq4}
\left|\aleph_1^{\mu}(\beta)\right| = \sum_{j=1}^{r}2^{l_j}+(1-r).    
\end{equation}
This fact means that for domains of relatively large size the traversal of points from the neighborhood $\aleph_1^{\mu}(\beta)$ can be naturally performed in parallel: each domain should be processed by an individual computing process. In more detail, assume that we have $t$ independent computing processes. Consider an arbitrary $\beta \in D^{\mu}$ and let $\beta^1$ be an arbitrary point from $D^{\mu}$, which differs from $\beta$ in coordinate number $1$ while coinciding with $\beta$ in the remaining coordinates. It is clear that in total there are $2^{l_1}-1$ points of this kind. Let~us traverse such points and compute the corresponding values of function $F_{f,\mu}$ using a computing process number $1$. We~can treat  the points which differ from $\beta$ only in the second coordinate in the similar fashion, etc. For~$t<r$ once the computing process finished the current task it can take any domains which have not yet been processed. One~process should perform the supervisor function and track whether the current Best Known Value have been improved. 

\item 
Let $\mu$ be an arbitrary merging mapping and $\beta^*$ be a local extremum of $F_{f,\mu}$ in $\langle D^{\mu}, \aleph_1^{\mu}\rangle$. 
It is easy to show that in this case $\alpha^* = \tau_{\mu}(\beta^{*})$ is a local extremum of $f$ in $\langle\{0,1\}^n, \aleph_1\rangle$. 
Assume that  $\mu_k$, $k \in \{1,\ldots,K\}$ are random merging mappings and $\alpha^* \in \{0,1\}^n$ is such a local extremum that points $\beta^*_k = \tau_{\mu_k}^{-1}(\alpha^*)$ are local extrema in the spaces $D^{\mu_k}$, $k \in \{1,\ldots,K\}$ for a large enough $K$. Then~let us call the point  $\alpha^*$ \emph{strong local extremum}.

\item 
Consider an arbitrary merging mapping $\mu:X \rightarrow Y$. Let $\alpha$ be 
an arbitrary point in $\{0,1\}^n$ and $\tilde{\aleph}(\alpha)$ be the 
neighborhood of $\alpha$ defined (with respect to fixed $\mu$) in 
accordance with \eqref{eq3}. Assume that  $l^* = \max\{l_1,\ldots,l_r\}$. 
It is easy to show that for $r \geq 2$ it holds that 
$\tilde{\aleph}(\alpha) \subset \aleph_{l^*}(\alpha)$. The power 
$\tilde{\aleph}(\alpha)$ (it is expressed by the number in the right part of 
\eqref{eq4}) can be significantly smaller than the power of 
$\aleph_{l^*}(\alpha)$. For example, if $n=100$, $r=10$ then  
$l_1 = \ldots = l_{10} = 10$, $|\tilde{\aleph}(\alpha)| = 10\times 2^{10}-9 = 10231$, 
while $|\aleph_{10}(\alpha)| > 1,5 \times 10^{13}$. 

\end{enumerate}

The property \textbf{d} essentially means that the merging mapping technique 
may be useless if the algorithm reached such a local extremum $\alpha^*$,
that the closest point (Hamming distance-wise) from  $\{0,1\}^n$ with
the better objective function value is at a distance $> l^*$ from $\alpha$.
On the first glance it might seem that this fact significantly limits
the applicability of the proposed method. However, 
it is possible to describe the supplementary 
technique for MVHC which is based on the idea to store strong local
extrema and use them to direct the search process. In this context
we will use the tabu lists concept which serves as a basis of the
tabu search strategy \cite{Glover}.

So, a strong local extremum is such a local extremum in $\{0,1\}^n$, for which
it was not possible to improve BKV even after a significant number of different 
merging mappings $\mu_k$, 
$k\in\{1,\ldots,K\}$. Let us denote such a point as  
$\alpha_1^*$. 
The~goal is to move from $\alpha_1^*$
to a point with the better BKV. Since we do not employ any knowledge about function $f$, it means that such transitions should rely on heuristic arguments. The first of the arguments is to escape the neighborhood of the kind $\aleph_{l_1^*}(\alpha_1^*)$ in $\{0,1\}^n$, where $l_1^*$ is a "critical" domain size that is known from the search history.
On the other hand, due to various reasons appealing to the "locality principle" it is undesirable to move "too far" from $\alpha_1^*$. It is especially relevant if during the transition to $\alpha_1^*$ the BKV have been improved multiple times.
Thus, the simplest step is to move to an arbitrary point situated at a distance of  $l_1^*+1$ from $\alpha_1^*$. Let $\alpha_2$ be such a point. Assume that we launch MVHC from this point and $\alpha_2^*$ is the resulting strong local extremum of $f$, which is different from $\alpha_1^*$. Similar to $l_1^*$ we can define critical domain size  $l_2^*$ used during the  transition from $\alpha_2$ to $\alpha_2^*$, critical domain size  $l_3^*$ and etc.

As a result, assume that we have strong local extrema $\alpha_1^*,\ldots,\alpha_R^*$ and our goal is to construct a point $\alpha_{R+1} \in \{0,1\}^n$ to launch the $R+1$-th iteration of MVHC from it. 
Taking into account the above, we have a problem of choosing next current point $\alpha_{R+1}$ as a point which satisfies a system of constraints of the following kind:
\begin{equation}
\label{eq6}
(d_H(\alpha_{R+1},\alpha_1^*) = L_1) \wedge    
\ldots \wedge
(d_H(\alpha_{R+1},\alpha_{R}^*) = L_{R}).
\end{equation}

The numbers $L_1,\ldots,L_{R}$ can be chosen according to different criteria.
Let us describe the simplest one. 
Consider the following system of constraints:
\begin{equation}
\label{eq7}
(d_H(\alpha_{R+1},\alpha_1^*) = l_1^*+1) \wedge \ldots \wedge (d_H(\alpha_{R+1},\alpha_{R}^*) = l_{R}^*+1).
\end{equation}
If there exists a point $\alpha_{R+1}$ that satisfies \eqref{eq7} then it is chosen 
as a starting point for the next MVHC iteration. If such a point does not exist, 
then we call \eqref{eq7} incompatible. 
In this case it is possible to relax
some of the constraints of the kind $d_H(\alpha_{R+1},\alpha_q^{*})=l_q^{*}+1$
by replacing them with constraints of the kind $d_H(\alpha_{R+1},\alpha_q^{*})=L_q$, where $L_q\geq l_q^{*}+2, q\in \{1,\ldots,R\}$. The resulting system of constraints of the
kind \eqref{eq6} is again to be tested for compatibility.

Let us consider the problem of testing the compatibility of an arbitrary system of the kind \eqref{eq6}. Consider an arbitrary constraint of the kind
$d_H(\alpha_{R+1},\alpha^{*}) = L$,
where $\alpha^{*} = (\alpha^1, \ldots, \alpha^n)$  is a known Boolean vector and $L$ is a known natural number.  Let us represent the unknown components of vector $\alpha_{R+1}$ using Boolean variables $z_1, \ldots, z_n$. Now consider the expression
\begin{equation}
\label{eq8}
(z_1 \oplus \alpha^1) + \ldots + (z_n \oplus \alpha^n) = L,
\end{equation}
where $\oplus$ is the sum $\bmod{2}$, and $+$ is an integer sum.

We can consider \eqref{eq8}
as an equation for unknown variables $z_1, \ldots, z_n$. It~is~easy to see that a set of vectors $\alpha_{R+1}$, which satisfy the constraint $d_H(\alpha_{R+1},\alpha^*) = L$, coincides with the set of solutions of the equation \eqref{eq8}. To solve the systems of equations of the kind \eqref{eq8} or to prove the inconsistent of such systems
we can use any complete algorithm for solving SAT. The corresponding reduction to SAT is {performed  effectively} using the procedures described, for example, in \cite{Een2006}. 

Thus, to choose new current points in the context of MVHC we can employ a strategy in which SAT oracles are {combined with} the tabu lists containing strong local extrema.

\section{Combining MVP with evolutionary computations}

Now let us consider how MVP can be combined with evolutionary algorithms. In
particular, let us study the MV-variant of (1+1)-EA. As it was stated above, for an arbitrary function of the kind \eqref{eq1} in \cite{Wegener} there was obtained the following upper bound: $E_{(1+1)-EA} \leq n^n$. Also in \cite{Wegener} there was given an example of a function (the \textsc{Trap} function) for which this bound is asymptotically achieved (in terms of \cite{Wegener}). 

In the description of the 
MV-variant of (1+1)-EA (we denote the corresponding algorithm as (1+1)-MVEA)
we want to preserve the following property of the original algorithm: that the 
expected value of the number of bits in which the Boolean vector is different from 
its (1+1)-random mutation should be 1.

Assume that there is an arbitrary merging mapping $\mu:X\rightarrow Y$, $|X| =n$, $|Y| = r$, $1 \leq r < n$. For an arbitrary point $\alpha \in \{0,1\}^n$ perform the following {steps}.

\begin{enumerate}
    \item[] \textbf{Input}: arbitrary point $\alpha \in \{0,1\}^n, f(\alpha)$;

    \item  construct a point $\beta = \tau^{-1}_{\mu}(\alpha)$; 
    perform $r$ Bernoulli trials with probability of success $p=1/r$;
    let $\{i_1,\ldots,i_q\} \subseteq \{1,\ldots,r\}$ be the numbers of successful
    trials; for each  $j \in \{i_1,\ldots,i_q\}$ consider the domain $D_j$ of
    a variable $y_j$, let $X_j$ be the set of preimages of $y_j$ for the mapping
    $\mu$, $\omega_j:D_j \rightarrow \{0,1\}^{|X_j|}$ is a fixed bijection,
    $\beta_j$ is the value of $y_j$ in $\beta$; 
    
    \item consider the Boolean vector $\alpha_j = \omega_j(\beta_j)$ of size $l_j = |X_j|$; perform (1+1)-random mutation on $\alpha_j$ with probability of success equal to 
    $\frac{1}{l_j}$, let $\alpha_j'$ be the result of the mutation, $\beta_j' = \omega^{-1}_j(\alpha_j')$; 
    
    \item construct a point $\beta'$ in $D^{\mu}$: in the coordinate with number
    $j \in \{i_1,\ldots,i_q\}$ the point $\beta'$ has $\beta_j'$; in the remaining
    coordinates with numbers from the set $\{1,\ldots,r\}\setminus
    \{i_1,\ldots,i_q\}$ the point $\beta'$ coincides with $\beta$;
    
    \item construct a point $\alpha' = \tau_{\mu}(\beta')$ ($\alpha' \in \{0,1\}^n$);
    \item[] \textbf{Output}: $(\alpha',f(\alpha'))$.
\end{enumerate}

\begin{definition}
To the described sequence of actions the result of which is the transition $\alpha \rightarrow \alpha'$ we will refer as (1+1)-merging variable random mutation. 
\end{definition}

\begin{lemma}
\label{lemma4}
For an arbitrary merging mapping $\mu$ the expected value of the number of bits in which 
the points $\alpha$ and $\alpha'$ differ is 1.
\end{lemma}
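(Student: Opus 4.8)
The plan is to compute the expected Hamming distance by decomposing it over the $r$ coordinate blocks of $D^{\mu}$ and then applying linearity of expectation. First I would observe that the induced bijection $\tau_{\mu}$ respects the block structure: the bits of $\alpha = \tau_{\mu}(\beta)$ occupying the positions of $X_j$ are exactly $\alpha_j = \omega_j(\beta_j)$, and likewise the bits of $\alpha' = \tau_{\mu}(\beta')$ on $X_j$ are $\omega_j(\beta_j')$. Since $\beta'$ coincides with $\beta$ outside the selected coordinates $\{i_1,\ldots,i_q\}$, the vectors $\alpha$ and $\alpha'$ can differ only inside the blocks corresponding to selected coordinates, and there they differ exactly where $\alpha_j$ and $\alpha_j'$ differ. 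Hence
$$
d_H(\alpha,\alpha') = \sum_{j=1}^{r} B_j\, M_j,
$$
where $B_j$ is the indicator of the event that the $j$-th Bernoulli trial succeeds and $M_j = d_H(\alpha_j,\alpha_j')$ is the number of bits flipped by the (1+1)-random mutation applied to the $l_j$-bit block $X_j$.

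Next I would evaluate each term. By construction $B_j$ is a Bernoulli variable with $E[B_j] = 1/r$, and it is independent of the mutation process that produces $M_j$, since the selection of coordinates is carried out before, and separately from, the within-block mutations. Within a selected block the (1+1)-random mutation flips each of the $l_j$ bits independently with probability $1/l_j$, so $M_j$ is a sum of $l_j$ independent Bernoulli random variables each with success probability $1/l_j$, whence $E[M_j] = l_j \cdot (1/l_j) = 1$. Combining these by independence gives $E[B_j M_j] = E[B_j]\,E[M_j] = 1/r$ for every $j$.

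Finally, linearity of expectation yields
$$
E\bigl[d_H(\alpha,\alpha')\bigr] = \sum_{j=1}^{r} E[B_j M_j] = \sum_{j=1}^{r} \frac{1}{r} = 1,
$$
as claimed. The only point requiring care — and the step I would treat as the main, though modest, obstacle — is the first one: justifying that $\tau_{\mu}$ transfers a per-coordinate change in $D^{\mu}$ into a change confined to the corresponding block of bit positions in $\{0,1\}^n$, so that the global Hamming distance genuinely splits as the block-wise sum above. Once this block decomposition is established, the result follows immediately from the two elementary expectation computations together with the independence of the coordinate selection and the within-block mutations.
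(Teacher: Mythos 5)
Your proof is correct and takes essentially the same route as the paper's: your decomposition $d_H(\alpha,\alpha') = \sum_{j} B_j M_j$ is exactly the paper's $\theta = \sum_{j}\zeta^j \cdot \xi^j$, with the same appeal to independence of the selection indicator and the within-block flip count, followed by linearity of expectation. If anything, you are slightly more explicit than the paper in justifying the block decomposition via $\tau_{\mu}$ and in computing $E[M_j]=1$ as a sum of $l_j$ independent Bernoulli$(1/l_j)$ variables, both of which the paper asserts without detail.
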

\begin{proof}
Let $\mu$ be an arbitrary merging mapping and $\alpha$ be an arbitrary vector from $\{0,1\}^n$. Let us apply to $\alpha$ (1+1)-merging variable random mutation with respect to $\mu$. Consider two kinds of random variables. The random variables of the first kind are the independent Bernoulli variables denoted as $\zeta^j$, $j \in \{1, \ldots r\}$. For each $j \in \{1, \ldots r\}$ the variable $\zeta^j$ has spectrum $\{0,1\}$ and distribution $\{1 - \frac{1}{r}, \frac{1}{r}\}$. If $\zeta^j = 1$ then we apply to $a_j$ an original (1+1)-random mutation with success probability $\frac{1}{l_j}$, where $l_j = |X_j|$.
As above, here we mean that $\alpha_j$ is an assignment of variables from $X_j$.

Also consider the random variables $\xi^j$: the value of each such variable is equal to the number of bits changed in vector $\alpha_j$, $j \in \{1, \ldots r\}$, as a result of applying to $\alpha_j$ (1+1)-random mutation with success probability $\frac{1}{l_j}$ ($l_j = |X_j|$). Thus, for each $j \in \{1, \ldots r\}$ the variable $\xi^j$ takes value from the set $\{0,1,\ldots,l_j\}$.
Then the number of changed bits in vector $\alpha_j$ after (1+1)-merging variable random mutation is a random variable
\begin{equation}
\label{T}
\theta = \sum^{r}_{j=1}\zeta^j \cdot \xi^j.
\end{equation}
Note that for each $j \in \{1, \ldots r\}$ the variables $\zeta^j$ and $\xi^j$ are obviously independent. Then from \eqref{T} the following holds: $$E[\theta] = \sum^{r}_{j=1}E[\zeta^j \cdot \xi^j] = \sum^{r}_{j=1}E[\zeta^j] \cdot E[\xi^j] = r \cdot \frac{1}{r} \cdot 1 = 1.$$
Thus, the Lemma 4 is proved.
\end{proof}

\begin{definition}
For a fixed merging mapping $\mu$, the (1+1)-merging variable evolutionary algorithm ((1+1)-MVEA) is a sequence of (1+1)-merging variable random mutations.
In the context of maximization problem of an arbitrary function \eqref{eq1}: the next mutation is applied to $\alpha'$ if $f(\alpha') \geq f(\alpha)$. Otherwise, the next mutation is applied to $\alpha$ (stagnation).
\end{definition}
The following definition is a variant of the Definition 5 from \cite{Wegener} with relation to (1+1)-MVEA.
\begin{definition}
Let $f$ be an arbitrary function of the kind \eqref{eq1} and $\alpha^{\#}$ be a global extremum of function $f$ on $\{0,1\}^n$. Let $\mu$ be an arbitrary merging mapping. We will define the expected running time of (1+1)-MVEA as the mean of the number of (1+1)-merging variable random mutations that have to be applied to an arbitrary point $\alpha \in\{0,1\}^n$ until it transforms into $\alpha^{\#}$. Denote this value by $E^{\mu}_{(1+1)-MVEA}$. 
\end{definition}

\begin{theorem}
\label{theorem2}
Assume that $f$ is an arbitrary function of the kind \eqref{eq1}, 
$\mu:X\rightarrow Y$ is an arbitrary merging mapping: 
$X = \{x_1,\ldots,x_n\}$, $Y = \{y_1,\ldots,y_r\}$, $1 \leq r<n$, $l_j = |X_j| \geq 2$ 
for all  $j \in \{1,\ldots,r\}$ and $l = 
\max\{l_1,\ldots,l_r\}$. 
Then the following estimation holds:
\begin{equation}
\label{eq9}
E^{\mu}_{(1+1)-MVEA} \leq r^r \cdot l^n.
\end{equation}
\end{theorem}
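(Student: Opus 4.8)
The plan is to follow the strategy behind the bound $E_{(1+1)-EA}\le n^n$ from \cite{Wegener}: rather than analysing the full trajectory of the algorithm, I would bound from below the probability that a single (1+1)-merging variable random mutation sends the \emph{current} point $\alpha$ directly to the target $\alpha^{\#}$, uniformly over all possible current points. Once such a uniform lower bound $p_{\min}$ is available, the acceptance rule makes $\alpha^{\#}$ absorbing (any mutation producing $\alpha^{\#}$ satisfies $f(\alpha^{\#})\ge f(\alpha)$ and is accepted), so the number of mutations needed to reach $\alpha^{\#}$ is stochastically dominated by a geometric random variable with success probability $p_{\min}$. This yields $E^{\mu}_{(1+1)-MVEA}\le 1/p_{\min}$, and the whole problem reduces to proving $p_{\min}\ge 1/(r^r\cdot l^n)$.

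To produce this lower bound I would not enumerate all ways of reaching $\alpha^{\#}$, but fix one convenient success event. Writing $\beta=\tau_{\mu}^{-1}(\alpha)$ and $\beta^{\#}=\tau_{\mu}^{-1}(\alpha^{\#})$, I would demand that (i) all $r$ Bernoulli trials in step~1 succeed, which happens with probability exactly $(1/r)^r=r^{-r}$, and (ii) for every coordinate $j$ the (1+1)-random mutation applied to $\alpha_j=\omega_j(\beta_j)$ turns it into the target substring $\omega_j(\beta_j^{\#})$. Since the $r$ trials and the $r$ block-mutations are mutually independent, the probability of this event factorises, and it suffices to bound each block factor from below.

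For a fixed block $j$, let $k_j$ be the number of bits in which $\alpha_j$ and $\omega_j(\beta_j^{\#})$ differ. The probability that the per-bit mutation with success probability $1/l_j$ flips exactly those $k_j$ bits and leaves the remaining $l_j-k_j$ bits unchanged is $(1/l_j)^{k_j}(1-1/l_j)^{l_j-k_j}$. The key computation is that, under the hypothesis $l_j\ge 2$, this expression is non-increasing in $k_j$, because the ratio of consecutive values equals $1/(l_j-1)\le 1$; hence it attains its minimum at $k_j=l_j$, giving the uniform block bound $(1/l_j)^{l_j}=l_j^{-l_j}$ independently of $\alpha$. This is precisely where the assumption $l_j\ge 2$ is used: for $l_j=1$ the factor $(1-1/l_j)^{l_j-k_j}$ degenerates and the monotonicity argument fails.

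Multiplying the block contributions, the one-step success probability is at least $r^{-r}\prod_{j=1}^{r}l_j^{-l_j}$. It remains to absorb the product into $l^n$: since $l_j\le l$ for every $j$ and $\sum_{j=1}^{r}l_j=n$, I get $\prod_{j=1}^{r}l_j^{l_j}\le\prod_{j=1}^{r}l^{l_j}=l^{\sum_{j}l_j}=l^n$, so $p_{\min}\ge r^{-r}l^{-n}$. Combined with the geometric-domination argument of the first paragraph, this gives $E^{\mu}_{(1+1)-MVEA}\le r^r\cdot l^n$, as claimed. The only genuinely delicate point is the per-block minimisation and its dependence on $l_j\ge 2$; the aggregation over blocks and the reduction to a geometric waiting time are routine.
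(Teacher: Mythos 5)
Your proof is correct and follows essentially the same route as the paper: a uniform lower bound $r^{-r}\prod_{j=1}^{r} l_j^{-l_j} \geq r^{-r}\cdot l^{-n}$ on the one-step probability of mutating the current point directly into $\alpha^{\#}$, obtained by factoring the independent Bernoulli trials and block mutations, followed by the standard geometric waiting-time argument. The only cosmetic differences are that you require all $r$ trials to succeed, whereas the paper handles the unchanged coordinates by a two-case analysis (trial fails, or trial succeeds and the mutation is the identity) that ends in the same per-block bound $\frac{1}{r}\cdot l_u^{-l_u}$, and that you prove the per-block estimate $p_j \geq l_j^{-l_j}$ inline via monotonicity in the number of differing bits rather than citing it from \cite{Wegener}.
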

\begin{proof}
Let $\mu$ be an arbitrary merging mapping for which all the conditions of the
theorem are satisfied. Now let us reason in a way similar to the proof of the
Theorem 6 in \cite{Wegener}. Let $\alpha \in \{0,1\}^n$ be an arbitrary point and
$\alpha^{\#}$ be a global extremum of the function \eqref{eq1} on $\{0,1\}^n$.
Denote by  $P_{\alpha \rightarrow \alpha^{\#}}$ the probability that $\alpha$
will transition into $\alpha^{\#}$ as a result of one iteration of the
(1+1)-MVEA-algorithm. Consider the points $\beta = \tau_{\mu}^{-1}(\alpha)$,
$\beta^{\#} = \tau_{\mu}^{-1}(\alpha^{\#})$ from the space $D^{\mu}$. 
In this context, for an arbitrary $j \in \{1,\ldots,r\}$ with the coordinates
$\beta_j$, $\beta_j^{\#}$ there will be associated the binary strings 
$\alpha_j$, $\alpha_j^{\#}$.

Now let us construct the lower bound for the probability of an {event} 
that as a result of one (1+1)-MVEA iteration there will take place a transition
from $\alpha$ to $\alpha^{\#}$. It is clear that this may happen if and only
if there takes place the transition from $\beta$ to $\beta^{\#}$.
Let $q = d_{H}(\beta,\beta^{\#})$ be the Hamming distance between 
$\beta$ and $\beta^{\#}$ in the space $D^{\mu}$. Assume that the set 
$J = \{i_1,\ldots,i_q\}\subseteq \{1,\ldots,r\}$ contains the numbers of
coordinates in $\beta$, in which this point differs from $\beta^{\#}$, and 
$U = \{1,\ldots,r\}\setminus J$. Let us denote by 
$\sigma = (\sigma_1,\ldots,\sigma_r)$, $\sigma_i \in \{0,1\}$, $i \in
\{1,\ldots,r\}$ the set of results of a sequence of $r$ Bernoulli trials
with success probability $1/r$ (as usually, we assume that $\sigma_1 = 1$
corresponds to success).

The transition $\beta \rightarrow \beta^{\#}$ takes place if and only if 
within one (1+1)-merging variable random mutation the following two events 
denoted by  $A_j$ and $B_u$ happen simultaneously:
\begin{enumerate}[label=\alph*.]
    \item for an arbitrary $j \in J$ the event $A_j$ takes place if and
    only if $\beta_j \rightarrow \beta_j^{\#}$;
    
    \item for an arbitrary $u \in U$ the event $B_u$ takes place if and
    only if $\beta_u \rightarrow \beta_u$.
\end{enumerate}
It is easy to see that all the events of the kind $A_j, B_u, j \in J, u \in U$
are independent, thus
$$
P_{\alpha \rightarrow \alpha^{\#}} = \left(\prod_{j\in J}\Pr\{A_j\}\right)\cdot\left(\prod_{u \in U}\Pr\{B_u\}\right).
$$
For an arbitrary $k \in \{1,\ldots,r\}$ let us denote by $p_k$ 
the probability that the result of the random (1+1)-mutation 
with probability of success $\frac{1}{l_k}$ of the string 
$\alpha_k = \omega_k(\beta_k)$ is the string $\alpha_k^{\#}$. 
Then for any $j \in J$ it holds that 
$\Pr\{A_j\} = \frac{1}{r}\cdot p_j$. 

For an arbitrary $u \in U$ the event $B_u$ can happen in
one of the two cases: first if $\sigma_u = 0$,  and, second, if 
$\sigma_u = 1$, but the result of the (1+1)-random mutation 
with the probability of success $\frac{1}{l_u}$ of the string $\alpha_u = \omega_u(\beta_u)$ is the string $\alpha_u$. 
In~the first case, $\Pr\{B_u\} = (1-\frac{1}{r})$. In the second case,
$\Pr\{B_u\} = \frac{1}{r}\cdot(1 - \frac{1}{l_u})^{l_u}$. Thus,
in any case when $r \geq 2$, $l_u \geq 2$ it holds that 
$\Pr\{B_u\} \geq \frac{1}{r}\cdot \frac{1}{l_u^{l_u}}$. 
Taking this fact into account the following bound holds:
\begin{equation}
\label{eq10}
P_{\alpha \rightarrow \alpha^{\#}} \geq \left(\frac{1}{r^q}\cdot\prod_{j\in J}p_j\right)\cdot\left(\frac{1}{r^{r-q}}\cdot\prod_{u \in U}\frac{1}{l_u^{l_u}}\right).
\end{equation}

In accordance with \cite{Wegener} for an arbitrary  $k \in \{1,\ldots,r\}$, 
such that $l_k \geq 2$, the following holds: $p_k \geq \frac{1}{l_k^{l_k}}$.
Together with \eqref{eq10} this fact gives us the next bound:
\begin{equation}
\label{eq11}
P_{\alpha \rightarrow \alpha^{\#}} \geq \frac{1}{r^{r}}  \cdot \prod_{k \in \{1,\ldots,r\}} \frac{1}{l_k^{l_k}}.
\end{equation}
Let us emphasize that \eqref{eq11} holds for an arbitrary $\alpha \in \{0,1\}^n$.
Assume that $l = \max\{l_1,\ldots,l_r\}$. Then, taking into account that 
$\sum_{k=1}^r l_k = n$, it follows from~\eqref{eq11}:
$$
P_{\alpha \rightarrow \alpha^{\#}} \geq \frac{1}{r^{r}}  \cdot \frac{1}{l^n}.
$$
The bound \eqref{eq9} follows from the latter inequality. The Theorem 2 is thus proved.
\end{proof}

The bound \eqref{eq9} looks a little surprising since it is actually easy to
determine the merging mappings with such parameters $r$ and $l$ that the 
corresponding variant of the bound \eqref{eq9} becomes significantly better
than the similar bound for (1+1)-EA shown in \cite{Wegener}. 

\begin{definition}
Assume that $|X| = n$, $|Y| = r$, $1 \leq r<n$ and $n = \lfloor\frac{n}{r}\rfloor\cdot r +
b$, where $b, b \in \{0, \ldots,r-1\}$ is the remainder from the division of $n$ 
by $r$. Let $\mu: X\rightarrow Y$ be an arbitrary merging mapping, such that 
for $b$ sets of the kind $X_j$, $j \in \{1,\ldots,r\}$ it holds that $|X_j| =
\lfloor\frac{n}{r}\rfloor+1$, and for the remaining $r-b$ sets of such kind $|X_j| 
= \lfloor\frac{n}{r}\rfloor$. Let us refer to such $\mu$ as  {uniform merging
mapping}.
\end{definition}

\begin{corollary}
Let $\mu: X\rightarrow Y$ be an arbitrary uniform merging mapping such that
$l_j \geq 2$ for all $j \in \{1,\ldots,r\}$. Then there exists such a function
$\delta(n):1<\delta(n) \leq n$, that the following evaluation holds:
\begin{equation}
\label{eq12}
E^{\mu}_{(1+1)-MVEA} \leq n^{n \cdot \left(\frac{1}{\delta(n)} - \frac{\log_n\delta(n)}{\delta(n)}+\log_n(\delta(n)+1)\right)}.
\end{equation}
\end{corollary}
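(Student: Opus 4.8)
The plan is to start from the bound of Theorem~\ref{theorem2}, namely $E^{\mu}_{(1+1)-MVEA} \leq r^r \cdot l^n$ with $l = \max\{l_1,\ldots,l_r\}$, and to rewrite the right-hand side as a power of $n$. Writing $r^r \cdot l^n = n^{\log_n(r^r l^n)} = n^{\,r\log_n r + n\log_n l}$, the whole task reduces to exhibiting a function $\delta(n)$ with $1<\delta(n)\le n$ for which the exponent $r\log_n r + n\log_n l$ is dominated by
$$
n\left(\frac{1}{\delta(n)} - \frac{\log_n\delta(n)}{\delta(n)} + \log_n(\delta(n)+1)\right).
$$

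The natural choice is $\delta(n) = n/r$. First I would check admissibility: since $1\le r<n$ we have $\delta(n)=n/r>1$ and $\delta(n)=n/r\le n$, so $1<\delta(n)\le n$ as required (the hypothesis $l_j\ge 2$ even forces $\delta(n)\ge 2$, because a uniform $\mu$ always has at least one block of size $\lfloor n/r\rfloor$). Next I would use the uniformity of $\mu$ to bound $l$: every block has size either $\lfloor n/r\rfloor$ or $\lfloor n/r\rfloor+1$, hence $l\le \lfloor n/r\rfloor+1\le n/r+1 = \delta(n)+1$. Consequently $n\log_n l \le n\log_n(\delta(n)+1)$, and it suffices to show
$$
r\log_n r + n\log_n(\delta(n)+1) = n\left(\frac{1}{\delta(n)} - \frac{\log_n\delta(n)}{\delta(n)} + \log_n(\delta(n)+1)\right).
$$

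The core of the argument is this last identity, which I would verify term by term after substituting $\delta(n)=n/r$. With this substitution $n\cdot\frac{1}{\delta(n)} = r$, while $n\cdot\frac{\log_n\delta(n)}{\delta(n)} = r\log_n(n/r) = r - r\log_n r$, so the first two terms on the right collapse to $r - (r - r\log_n r) = r\log_n r$; the third term $n\log_n(\delta(n)+1)$ matches verbatim. Chaining the inequalities then yields $E^{\mu}_{(1+1)-MVEA}\le r^r l^n \le n^{(\cdots)}$, which is exactly \eqref{eq12}.

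The computations here are entirely elementary, so there is no real analytic obstacle; the only points requiring care are (i) guessing the correct $\delta(n)=n/r$ that makes the exponent identity hold, and (ii) handling the floor in the block sizes so that the bound $l\le\delta(n)+1$ is valid in both the $b=0$ and $b>0$ cases. Once $\delta(n)=n/r$ is identified, everything reduces to the bookkeeping of logarithms described above.
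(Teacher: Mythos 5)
Your proof is correct and follows essentially the same route as the paper: both start from the bound $r^r\cdot l^n$ of Theorem~\ref{theorem2}, use uniformity to get $l \le n/r+1$, set $\delta(n)=n/r$, and reduce everything to elementary logarithm bookkeeping. The only difference is presentational — you work additively in the exponent, while the paper manipulates the product $\left(n/\delta(n)\right)^{n/\delta(n)}(\delta(n)+1)^n$ directly into powers of $n$.
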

\begin{proof}
Let $\mu: X\rightarrow Y$ be an arbitrary uniform merging mapping. 
By definition it means that $2 \leq l \leq \frac{n}{r}+1$ for all $j \in \{1, \ldots, r\}$ and, thus we can use the evaluation (9):
\begin{equation}
\label{eq13}
    E^{\mu}_{(1+1)-MVEA} \leq r^r \cdot {\left(\frac{n}{r} + 1\right)}^n.
\end{equation}
Now introduce  $\delta(n):\delta(n) = n/r$. Then $1<\delta(n) \leq n$.
Taking this into account we can transform \eqref{eq13} as follows:
\begin{multline*}
 E^{\mu}_{(1+1)-MVEA} \leq \left(\frac{n}{\delta(n)}\right)^{\frac{n}{\delta(n)}}
 \cdot(\delta(n)+1)^n = n^{\frac{n}{\delta(n)}}\cdot 
 \left(\delta(n)\right)^{-\frac{n}{\delta(n)}}\cdot (\delta(n)+1)^n=\\
 = n^{\frac{n}{\delta(n)}}\cdot n^{-\frac{n}{\delta(n)}\cdot 
 \log_n\delta(n)}\cdot
 n^{n\cdot \log_n (\delta(n)+1)}= 
 n^{n\cdot \left(\frac{1}{\delta(n)} -\frac{\log_n\delta(n)}{\delta(n)} + \log_n(\delta(n)+1)\right)
 }.
\end{multline*}
Thus the Corollary 1 is proved.
\end{proof}

Based on \eqref{eq12} it is possible to give a number of examples of uniform 
merging mappings, that provide better worst-case-estimations of (1+1)-MVEA 
for an arbitrary function of the kind \eqref{eq1} compared to the similar estimation
for (1+1)-EA from \cite{Wegener}. Indeed, for example for
$\delta(n) \sim \sqrt[3]{n}$ and for any  $n \geq 27$ it follows from \eqref{eq12} 
that $E^{\mu}_{(1+1)-MVEA} \lesssim
n^{n\cdot(\frac{1}{\sqrt[3]{n}}-\frac{1}{3\sqrt[3]{n}}+\frac{1}{2})}$
(here it is taken into account that for $n \geq 27$ it holds that 
$\log_n(\sqrt[3]{n} + 1) < \frac{1}{2}$). Thus in this case the following holds 
$E^{\mu}_{(1+1)-MVEA} \lesssim n^{(\frac{n}{2}+\frac{2}{3}n^{2/3})}$.

\section{Preliminary computational results}
The MVHC was implemented in the form of a multi-threaded C++ application. It 
employs the parallel variant of the procedure for traversing the neighborhoods
in the search space (see Section 3).

In the role of test instances we considered the problems of finding preimages of some cryptographic functions reduced to the Boolean Satisfiability problem (SAT). 
Such instances are justified to be hard, thus they 
can be viewed as a good test suite to compare the effectiveness of combinatorial
algorithms. At the current stage we considered the problems of finding
preimages of a well-known MD4 cryptographic hash function \cite{Rivest} with
additional constraints on the hash value. In particular, the goal was to find such 512-bit inputs that yield MD4 hash values with leading zeros. This problem can be 
reduced to SAT effectively. For this purpose we employed the Transalg software system \cite{Transalg}.

Let $\{0,1\}^{512} \rightarrow \{0,1\}^{128}$
be a function which is defined by the MD4 algorithm. 
Let $C$ be a CNF which encodes this algorithm.
In the set of variables from $C$ let us select two sets.
First set is $X^{in}$, which consists of 512 Boolean variables encoding an input of MD4.
Second one is $X^{out}$ -- a set of 128 Boolean variables encoding the output of MD4.
In the set $X^{out}$ select $k$ variables encoding the leading bits of the hash value, and assign these variables with value 0. Denote the resulting CNF as $C_k$. This CNF is satisfiable and from any satisfying assignment one can effectively extract such $\alpha \in \{0,1\}^{512}$ for which the leading $k$ bits of corresponding MD4 hash value are equal to zero.

To find the satisfying assignment for $C_k$ we used two approaches. 
First we applied to $C_k$ the multithreaded solvers, based on the CDCL algorithm \cite{Joao-Handbook}, that won the yearly SAT competitions in recent years.
In the second approach we used the MVHC algorithm described in the Section 3 of the present paper. 
Consider,~a~set~of variables $X^{in}, |X^{in}| = 512$ in CNF $C_k$. Associate an arbitrary vector $\alpha \in \{0,1\}^{512}$ with a set of literals over variables from $X^{in}$.
Recall, that a literal is either the variable itself or its negation.
If a component of vector $\alpha$ corresponding to a variable $x_i$, $i \in \{1,\ldots,512\}$ takes value 1, then the corresponding literal is $x_i$. Otherwise, the literal is $\lnot x_i$. All such literals are conjunctively added to CNF $C_k$ and the  resulting CNF is denoted by $C_k(\alpha)$.
It is well known that set $X^{in}$ is a Strong Unit Propagation Backdoor Set (SUPBS) for CNF $C_k$ \cite{Williams}. This~means that the satisfiability of CNF $C_k(\alpha)$ can be checked in time linear on the size of this CNF using a simple procedure of Boolean constraints propagation called Unit Propagation Rule \cite{Joao-Handbook}.
Thus, we consider function of the kind \eqref{eq1} which associates with an arbitrary $\alpha \in \{0,1\}^{512}$ a number of clauses in $C_k(\alpha)$ that take the value of $1$ as a result of application of Unit Propagation rule to CNF $C_k(\alpha)$. 
If the value of this function is equal to the number of clauses that are satisfied in $C_k(\alpha)$, then $\alpha$ is 
a MD4 preimage of a hash value with $k$ leading zero bits.
For this function the problem of maximization on $\{0,1\}^{512}$ was solved using MVHC algorithm, in which uniform merging mapping was employed.

All tested {algorithms} were run on a personal computer (Intel Core i7, 16 GB RAM) in 8 threads. Since these algorithms are randomized, the result of each test is an average time of three independent launches for each algorithm. The~obtained results are presented in Table~\ref{results}.

\begin{table}[ht]
\centering
\caption{An average time (in seconds) of finding a MD4 preimage for hash value with $k$ leading zero bits. For MVHC algorithm an uniform merging mapping was used}
\label{results}
\begin{tabular}{|l|l|l|l|}
\hline
Solver & $k = 18$ & $k = 20$ & $k = 22$\\
\hline
\textsc{MVHC} ($l$=4) & 244.8 & 1028 & 2126\\\hline
\textsc{MVHC} ($l$=8) & 490.1 & 1044.8  & 2003.1\\ \hline
\textsc{MVHC} ($l$=12) & 30 & 105.9 & 1882.8\\\hline
\textsc{cryptominisat} \cite{Soos} &429.1 & 1197.9 & 3197.5\\\hline
\textsc{plingeling} \cite{Biere} & 2175.1 & 1840.3 & 4218.4\\ \hline

\end{tabular}
\end{table}

\section{Related Work (briefly)}

As it was mentioned above, there is a large set of metaheuristics and corresponding discussion contained in the monograph \cite{Luke} by S.~Luke.
One of the first papers in which some complexity estimations of the simplest evolutionary algorithm (1+1)-EA were presented was G.~Rudolf's {dissertation}  \cite{Rudolf}.

Variable Neighborhood Search method (VNS) was first proposed in \cite{Mlad-97} and developed in subsequent papers: \cite{Hans-01,Hans-17} and a number of others.
Also we would like to note that the ideas underlying the MVP are similar in nature to those previously used in papers dedicated to the application of  Large Scale Neighborhood Search \cite{Ahuja,Igor}. 

A number of results on the complexity estimation of evolutionary algorithms originates in \cite{Wegener}. These studies are actively conducted to the present day. From~the latest results in this area one should note \cite{Doerr}.

We emphasize that {MaxSAT} is not the main object of study of the present paper. The~special case of MaxSAT, related to the preimage finding problem of cryptographic functions, was considered only as an example of the maximization problem of pseudo-Boolean function. Listing  the key papers devoted to SAT and MaxSAT would take up too much space.
In this context, we refer only to the well-known handbook \cite{Handbook} and, in particular, to its chapter on MaxSAT \cite{MAXSAT}. It~should be noted that in a number of papers various metaheurists were used to solve MaxSAT, employing both local search (see \cite{Ansoteg,Bouhmalla}, etc.) and the concept of evolutionary computations (see, for example, \cite{Bouhmalla,Buzdalov-Doerr}).

\section{Conclusion and Acknowledgements}
In the present paper we described a metaheuristic technique focused on the problem of pseudo-Boolean optimization.
Arguments were given for using this technique both in combination with local search methods and in conjunction with evolutionary algorithms.
The proposed technique when applied to local search methods can be considered as a special case of Variable Neighborhood Search.
The first program implementation of the technique turned out to be quite effective in application to some reasonably hard problems of pseudo-Boolean optimization.

The author expresses deep gratitude to Ilya Otpuschennikov for the program implementation of MVHC algorithm. The author also thanks Maxim Buzdalov for productive discussion and useful advice.

The research was funded by Russian Science Foundation (project No. 16-11-10046).

\end{document}